\newcommand{\Fig}[1]{Fig.~\ref{#1}}
\newcommand{\Tab}[1]{Tab.~\ref{#1}}
\newcommand{\TT}[1]{\text{\tt #1}}
\newcommand{\rank}{\mathrm{rank}}
\newcommand{\dom}{\mathrm{Dom}}
\newcommand{\spn}{\mathrm{span}}
\newcommand{\mng}[1]{\llbracket #1 \rrbracket}
\newcommand{\cat}{\mathrm{cat}}
\newcommand{\cons}{\mathrm{cons}}
\newcommand{\ext}{\mathrm{ex}}
\newcommand{\bfcat}{\mathbf{cat}}
\newcommand{\bfcons}{\mathbf{cons}}
\newcommand{\bfext}{\mathbf{ex}}
\newcommand{\1}{\,\mathbf{1}}
\newcommand{\bra}[1]{\langle #1|}
\newcommand{\ket}[1]{|#1\rangle}
\newcommand{\braket}[2]{\langle #1|#2\rangle}
\journalname{Cognitive Computation}
\begin{document}


\title{Vector symbolic architectures for context-free grammars}
\titlerunning{VSA for CFG}

\author{Peter beim Graben,$^*$\thanks{$^*$Corresponding author}
    Markus Huber, Werner Meyer, \\
    Ronald R\"omer and Matthias Wolff}

\authorrunning{beim Graben et al.}

\institute{
    Peter beim Graben
        \at
    Bernstein Center for Computational Neuroscience, Berlin, Germany
    \and
    Peter beim Graben \and Markus Huber \and Werner Meyer \and Ronald R\"omer \and Matthias Wolff
        \at
    Department of Communication Engineering \\
    Brandenburgische Technische Universit\"at (BTU) Cottbus--Senftenberg\\
    Platz der Deutschen Einheit 1 \\
    D -- 03046 Cottbus\\
    \email{peter.beimgraben@b-tu.de}
    }

\date{\today}
\maketitle

\begin{abstract}
Background / introduction. Vector symbolic architectures (VSA) are a viable approach for the hyperdimensional representation of symbolic data, such as documents, syntactic structures, or semantic frames.
Methods. We present a rigorous mathematical framework for the representation of phrase structure trees and parse trees of context-free grammars (CFG) in Fock space, i.e. infinite-dimensional Hilbert space as being used in quantum field theory. We define a novel normal form for CFG by means of term algebras. Using a recently developed software toolbox, called FockBox, we construct Fock space representations for the trees built up by a CFG left-corner (LC) parser.
Results. We prove a universal representation theorem for CFG term algebras in Fock space and illustrate our findings through a low-dimensional principal component projection of the LC parser states.
Conclusions. Our approach could leverage the development of VSA for explainable artificial intelligence (XAI) by means of hyperdimensional deep neural computation. It could be of significance for the improvement of cognitive user interfaces and other applications of VSA in machine learning.
\end{abstract}

\keywords{Geometric cognition, formal grammars, language processing, vector symbolic architectures, Fock space, explainable artificial intelligence (XAI)}

\section{Introduction}
\label{sec:intro}

Claude E. Shannon, the pioneer of information theory, presented in 1952 a ``maze-solving machine'' as one of the first proper technical cognitive systems \cite{Shannon53}.\footnote{
See also Shannon's instructive video demonstration at \url{https://www.youtube.com/watch?v=vPKkXibQXGA}.
}
It comprises a maze in form of a rectangular board partitioned into discrete cells that are partially separated by removable walls, and a magnetized ``mouse'' (nicknamed ``Theseus'', after the ancient Greek hero) as a cognitive agent. The mouse possesses as an actuator a motorized electromagnet beneath the maze board. The magnet pulls the mouse through the maze. Sensation and memory are implemented by a circuit of relays, switching their states after encounters with a wall. In this way, Shannon technically realized a
simple, non-hierarchic \emph{perception-action cycle} (PAC) \cite{Young10}, quite similar to the more sophisticated version depicted in \Fig{fig:PAC} as a viable generalization of a cybernetic feedback loop.

In general, PAC form the core of a \emph{cognitive dynamic system} \cite{Young10, Haykin12}. They describe the interaction of a cognitive agent with a dynamically changing world as shown in \Fig{fig:PAC}. The agent is equipped with \emph{sensors} for the perception of its current state in the environment and with \emph{actuators} allowing for active state changes. A central control prescribes goals and strategies for problem solving that could be trained by either trial-and-error learning as in Shannon's construction, or, more generally, by reinforcement learning \cite{Haykin12}.

\begingroup
\newcommand{\BOX}[3]{%
  \pgfmathsetmacro\x{(#1)}
  \pgfmathsetmacro\y{(#2)}
  \def\w{4}
  \def\h{1}
  \pgfmathsetmacro\mpw{(\xy*\w)}
  \draw[rounded corners] (\x-\w/2, \y-\h/2) rectangle (\x+\w/2, \y+\h/2);
  \node at (\x,\y) {\begin{minipage}{\mpw cm}\centering\baselineskip=0pt #3\end{minipage}};
}
\def\xy{0.65}
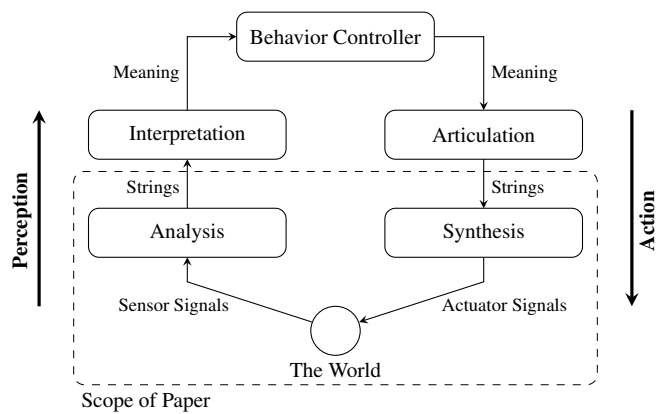
\begin{figure}[H]
\begin{center}
  \begin{tikzpicture}[x=\xy cm, y=\xy cm, >=stealth]
    \draw (3,0) circle (0.5);
    \node[anchor=north] at (3,-0.5) {The World};
    \draw[->] (2.52,0.18) -- (0,1) -- (0,1.5); \node[anchor=east] at (1,0.5) {\scriptsize Sensor Signals};
    \draw[->] (6,1.5) -- (6,1) -- (3.48,0.18); \node[anchor=west] at (5,0.5) {\scriptsize Actuator Signals};
    \BOX{0}{2}{Analysis};
    \BOX{6}{2}{Synthesis};
    \draw[->] (0,2.5) -- (0,3.5); \node[anchor=east] at (0,2.9) {\scriptsize Strings};
    \draw[->] (6,3.5) -- (6,2.5); \node[anchor=west] at (6,2.9) {\scriptsize Strings};
    \BOX{0}{4}{Interpretation};
    \BOX{6}{4}{Articulation};
    \draw[->] (0,4.5) -- (0,6) -- (1,6); \node[anchor=east] at (0,5.25) {\scriptsize Meaning};
    \draw[->] (5,6) -- (6,6) -- (6,4.5); \node[anchor=west] at (6,5.25) {\scriptsize Meaning};
    \BOX{3}{6}{Behavior Controller};
    \draw[rounded corners,very thin,dashed] (-2.3,-1.1) rectangle (8.3,3.25);
    \node[anchor=north west] at (-2.3,-1.1) {Scope of Paper};
    \draw[very thick,->] (-3,0.5) -- (-3,4.5); \node[rotate=90,anchor=south] at (-3,2.25) {\bf Perception};
    \draw[very thick,->] ( 9,4.5) -- ( 9,0.5); \node[rotate=90,anchor=north] at ( 9,2.25) {\bf Action};
  \end{tikzpicture}
  \caption{Hierarchical perception-action cycle (PAC) for a cognitive dynamic system. The scope of the present paper is indicated by the dashed boundary.}
  \label{fig:PAC}
\end{center}
\end{figure}
\endgroup

In Shannon's mouse-maze system, the motor (the \emph{actuator}) pulls the mouse along a path until it bumps into a wall which is registered by a \emph{sensor}. This perception is stored by switching a relay, subsequently avoiding the corresponding action. The \emph{behavior control} prescribes a certain maze cell where the agent may find a ``piece of cheese'' as a goal. When the goal is eventually reached, no further action is necessary. In a first run, the mouse follows an irregular path according to a trial-and-error strategy, while building up a memory trace in the relay array. In every further run, the successfully learned path is pursued at once. However, when the operator modifies the arrangement of walls, the previously learned path becomes useless and the agent has to learn from the very beginning. Therefore, \citet[p. 1238]{Shannon53} concludes:
\begin{quote}
    The maze-solver may be said to exhibit at a very primitive level the abilities to (1) solve problems by trial and error, (2) repeat the solutions without the errors, (3) add and correlate new information to a partial solution, (4) forget a solution when it is no longer applicable.
\end{quote}

In Shannon's original approach, the mouse learns by trial-and-error whenever it bumps into a wall. More sophisticated cognitive dynamic systems should be able to draw logical inferences and to communicate either with each other or with an external operator, respectively \cite{RomerEA19}. This requires higher levels of mental representations such as formal logics and grammars. Consider, e.g., the operator's utterance:
\begin{equation}\label{eq:sentence}
  \TT{the mouse ate cheese}
\end{equation}
(note that symbols will be set in typewriter font in order to abstract from their conventional meaning in the first place). In the PAC described in \Fig{fig:PAC}, the acoustic signal has firstly to be \emph{analyzed} in order to obtain a \emph{phonetic string representation}. For understanding its \emph{meaning}, the agent has secondly to process the utterance grammatically through \emph{syntactic parsing}. Finally, the syntactic representation, e.g. in form of a phrase structure tree, must be \emph{interpreted} as a semantic representation which the agent can ultimately understand \cite{Karttunen84}. Depending upon such understanding, the agent can draw logical inferences and derive the appropriate behavior for controlling the actuators. In case of verbal behavior \cite{Skinner15}, the agent therefore computes an appropriate response, first as a semantic representation, that is \emph{articulated} into a syntactic and phonetic form and finally \emph{synthesized} as an acoustic signal. In any case, high-level representations are symbolic and their processing is rule-driven, in contrast to low-level sensation and actuation where physical signals are essentially continuous.

Originally, Shannon used an array of relays as the agent's memory. This has later been termed the ``learning matrix'' by \citet{SteinbuchSchmitt67}. Learning matrices and \emph{vector symbolic architectures} (VSA) provide viable interfaces between hierarchically organized symbolic data structures such as phrase structure trees or semantic representations and continuous state space approaches as required for deep neural networks (DNN) \cite{CunBengioHinton15, Schmidhuber15}. Beginning with seminal studies by \citet{Smolensky90} and \citet{Mizraji89}, and later pursued by \citet{Plate95}, \citet{GrabenPotthast09a}, and \citet{Kanerva09} among many others, those architectures have been dubbed VSA by \citet{Gayler06} (cf. also \cite{LevyGayler08}).

In a VSA, symbols and variables are represented as filler and role vectors of some underlying linear \emph{embedding spaces} \cite{BengioCourvilleVincent13, JonesMewhort07}, respectively. When a symbol is assigned to a variable, the corresponding filler vector is \emph{bound} to the corresponding role vector. Different filler-role bindings can be \emph{bundled} together to form a data structure \cite{LevyGayler08}, such as a list, a frame, or a table of a relational data base \cite{SchmittWirschingWolff19}. Those structures can be recursively bound to other fillers and further bundled together to yield arbitrarily complex data structures \cite{GrabenPotthast09a}.

VSA have recently been employed for semantic spaces \cite{JonesMewhort07, RecchiaEA15}, logical inferences \cite{EmruliGaylerSandin13, WiddowsCohen14, Mizraji20}, data base queries \cite{SchmittWirschingWolff19, KleykoOsipovGayler16}, and autoassociative memories \cite{GritsenkoEA17, MizrajiPomiLin18}. \Citet{WolffHuberEA18} developed a VSA model for cognitive representations and their induction in Shannon's mouse-maze system. In the present study, we focus on the dashed region in \Fig{fig:PAC}, by elaborating earlier approaches for VSA language processors \cite{GrabenPotthast09a, CarmantiniEA17}. Specifically, we discuss vector space representations of context-free grammars (CFG) and push-down automata \cite{HopcroftUllman79}, as used in current speech and language technologies \cite{BengioCourvilleVincent13, OtterMedinaKalita20, Goldberg17, MinaeeEA20}.

Deploying neural networks in language technology became increasingly important in recent time. Beginning with hard-wired recurrent neural architectures \cite{ChenHonavar99, Pollack91, SiegelmannSontag95, CarmantiniEA17}, the advent of deep learning algorithms lead to  state-of-the-art language processing through recursive neural networks (RNN, \cite{SocherManningNg10}), through long-short-term memory networks (LSTM, \cite{HochreiterSchmidhuber97, HupkesDankersEA20}), and through convolutional neural networks (CNN, \cite{CunBengioHinton15, DauphinFanEA16}), with their most recent improvements, capsule networks \cite{PatrickEA19, YangZhao19}; for a survey consult \cite{BengioCourvilleVincent13, OtterMedinaKalita20, Goldberg17, MinaeeEA20}. Particularly interesting are latest attempts of Smolensky and collaborators to merge VSA and DNN into tensor product recurrent networks (TPRN, \cite{PalangiSmolenskyEA17,  PalangiSmolenskyEA18, TangSmolenskySa19}) which are able to directly learn filler-role bindings by end-to-end training under a special quantization regularization constraint.

Despite these impressive achievements, DNN are intrinsic black-box models, propagating input patterns through their hidden layers toward the associated output patterns. The hidden layers may have several hundred-thousands up to some billions synaptic weight parameters that are trained by regularized gradient climbing algorithms. After training, the network develops a hidden representation of the input features and the computational rules to transform them into output. Yet these representations are completely opaque and nobody can \emph{explain} how the input is mapped onto the output \cite{CunBengioHinton15}.

Therefore, according to \citet{Marcus20}, the next-generation AI, must be explainable, robust and trustworthy. Creating \emph{explainable AI} (XAI) \cite{DoranSchulzBesold17} is an important challenge for current research \cite{MontavonSamekMuller18}. For this aim, it is mandatory not only to develop new algorithms and networks architectures, such as TPRN \cite{PalangiSmolenskyEA17,  PalangiSmolenskyEA18, TangSmolenskySa19}, e.g., but also conceptual understanding of their formal structures. To this end, we present rigorous proofs for vector space representations of context-free grammars (CFG) and push-down automata. We suggest a novel normal form for CFG, allowing to express CFG parse trees as terms over a symbolic term algebra. Rule-based derivations over that algebra are then represented as transformation matrices in Fock space \cite{Fock32, Aerts09}. Our approach could lead to the development of new machine learning algorithms for training neural networks as rule-based symbol processors. In contrast to black-box DNN, our method is essentially transparent and hence explainable and trustworthy.

\section{Methods}
\label{sec:meth}

We start from a symbolic, rule-based system that can be described in terms of formal grammar and automata theory. Specifically, we chose context-free grammars (CFG) and push-down automata as their processors here \cite{HopcroftUllman79}. In the second step, we reformulate these languages through term algebras and their processing through partial functions over term algebras. We introduce a novel normal form for CFG, called \emph{term normal form}, and prove that any CFG in Chomsky normal form can be transformed into term normal form. Finally, we introduce a vector symbolic architecture by assigning basis vectors of a high-dimensional linear space to the respective symbols and their roles in a phrase structure tree. We suggest a recursive function for mapping CFG phrase structure trees onto representation vectors in Fock space and prove a representation theorem for the partial rule-based processing functions. Finally, we present a software toolbox, FockBox for handling Fock space VSA representations \cite{WolffWirschingEA18}.

\subsection{Context-free Grammars}
\label{sec:cfg}

Consider again the simple sentence \eqref{eq:sentence} as a motivating example. According to linguistic theory, sentences such as \eqref{eq:sentence} exhibit a hierarchical structure, indicating a logical subject-predicate relationship. In \eqref{eq:sentence} ``\TT{the mouse}'' appears as subject and the phrase ``\TT{ate cheese}'' as the predicate, which is further organized into a transitive verb ``\TT{ate}'' and its direct object ``\TT{cheese}''. The hierarchical structure of sentence \eqref{eq:sentence} can therefore be either expressed through regular brackets, as in \eqref{eq:braksen}
\begin{equation}\label{eq:braksen}
  \TT{[[[the] [mouse]] [ate [cheese]]]} \:,
\end{equation}
or, likewise as a phrase structure tree as in \Fig{fig:pst}

\begin{figure}[H]
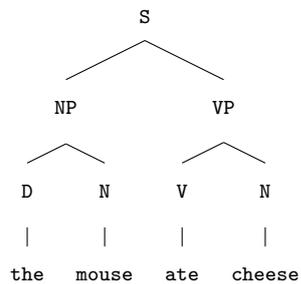

  \centering
    \Tree
    [.\TT{S}
        [.\TT{NP}
            [.\TT{D} \TT{the}
            ].\TT{D}
            [.\TT{N} \TT{mouse}
            ].\TT{N}
        ].\TT{NP}
        [.\TT{VP}
            [.\TT{V} \TT{ate}
            ].\TT{V}
            [.\TT{N} \TT{cheese}
            ].\TT{N}
        ].\TT{VP}
    ].\TT{S}
  \caption{\label{fig:pst} Phrase structure tree of example sentence \eqref{eq:sentence}.}
\end{figure}

In \Fig{fig:pst} every internal node of the tree denotes a syntactic category: \TT{S} stands for ``sentence'', \TT{NP} for ``noun phrase'', the sentence's subject, \TT{VP} for ``verbal phrase'', the predicate, \TT{D} for ``determiner'', \TT{N} for ``noun'', and \TT{V} for ``verb''.

The phrase structure tree \Fig{fig:pst} immediately gives rise to a context-free grammar (CFG) by interpreting every branch as a rewriting rule in Chomsky normal form \cite{Kracht03, HopcroftUllman79}

\begin{subequations}
\begin{align}
  \TT{S} &\to \TT{NP} \quad \TT{VP} \label{eq:cfg1} \\
  \TT{NP} &\to \TT{D} \quad \TT{N} \label{eq:cfg2} \\
  \TT{VP} &\to \TT{V} \quad \TT{N} \label{eq:cfg3} \\
  \TT{D} &\to \TT{the} \label{eq:cfg4} \\
  \TT{N} &\to \TT{mouse} \label{eq:cfg5} \\
  \TT{V} &\to \TT{ate} \label{eq:cfg6} \\
  \TT{N} &\to \TT{cheese} \label{eq:cfg7}
\end{align}
\end{subequations}
where one distinguishes between syntactical rules (\ref{eq:cfg1} -- \ref{eq:cfg3}) and lexical rules (\ref{eq:cfg4} -- \ref{eq:cfg7}), respectively. More abstractly, a CFG is given as a quadruple $G = (T, N, \mathtt{S}, R)$, such that in our example $T = \{ \TT{the},  \TT{mouse}, \TT{ate}, \TT{cheese} \}$ is the set of words or terminal symbols, $N = \{ \TT{S}, \TT{NP}, \TT{VP}, \TT{D},  \TT{N}, \TT{V} \}$ is the set of categories or nonterminal symbols, $\TT{S} \in N$ is the distinguished start symbol, and $R \subset N \times (N \cup T)^*$ is a set of rules. A rule $r = (A, \gamma) \in R$ is usually written as a production $r : A \to \gamma$ where $A \in N$ denotes a category and $\gamma \in (N \cup T)^* $ a finite string of terminals or categories of length $n = |\gamma|$.

Context-free grammars can be processed by push-down automata \cite{HopcroftUllman79}. Regarding psycholinguistic plausibilty, the left-corner (LC) parser is particularly relevant because input-driven bottom-up and expectation-driven top-down processes are tightly intermingled with each other \cite{Hale11}. An LC parser possesses, such as any other push-down automaton, two memory tapes: firstly a working memory, called stack, operating in a last-in-first-out (LIFO) fashion, and an input tape storing the sentence to be processed.

In the most simple cases, when a given CFG does not contain ambiguities (as in (\ref{eq:cfg1} -- \ref{eq:cfg7}) for our example \eqref{eq:sentence}), an LC parser can work deterministically. The LC parsing algorithm operates in four different modes: i) if nothing else is possible and if the input tape is not empty, the first word of the input is \emph{shift}ed into the stack; ii) if the first symbol in the stack is the left corner of a syntactic rule, the first stack symbol is rewritten by a predicted category (indicated by square brackets in \Tab{tab:lcpars}) followed by the left-hand side of the rule (\emph{project}); iii) if a category in the stack was correctly predicted, the matching symbols are removed from the stack (\emph{complete}); iv) if the input tape is empty and the stack only contains the start symbol of the grammar, the automaton moves into the \emph{accept}ing state; otherwise, syntactic language processing had failed. Applying the LC algorithm to our example CFG leads to the symbolic process shown in \Tab{tab:lcpars}.

\begin{table}[H]
  \centering
  \begin{tabular}{llll}
     \hline
     step & stack & input & operation \\
     \hline
     0 & $\epsilon$ & \TT{the mouse ate cheese} & shift \\
     1 & \TT{the} & \TT{mouse ate cheese} & project \eqref{eq:cfg4} \\
     2 & \TT{D} & \TT{mouse ate cheese} & project \eqref{eq:cfg2} \\
     3 & \TT{[N]} \TT{NP}  & \TT{mouse ate cheese} & shift \\
     4 & \TT{mouse} \TT{[N]} \TT{NP}  & \TT{ate cheese} & project \eqref{eq:cfg5}  \\
     5 & \TT{N} \TT{[N]} \TT{NP}  & \TT{ate cheese} & complete \\
     6 & \TT{NP}  & \TT{ate cheese} & project \eqref{eq:cfg1}  \\
     7 & \TT{[VP]} \TT{S}  & \TT{ate cheese} & shift  \\
     8 & \TT{ate} \TT{[VP]} \TT{S}  & \TT{cheese} & project \eqref{eq:cfg6} \\
     9 & \TT{V} \TT{[VP]} \TT{S}   & \TT{cheese} & project \eqref{eq:cfg3}  \\
     10 & \TT{[N]} \TT{VP} \TT{[VP]} \TT{S}  & \TT{cheese} & shift \\
     11 & \TT{cheese} \TT{[N]} \TT{VP} \TT{[VP]} \TT{S}  & $\epsilon$ & project \eqref{eq:cfg7} \\
     12 & \TT{N} \TT{[N]} \TT{VP} \TT{[VP]} \TT{S}  & $\epsilon$ & complete \\
     13 & \TT{VP} \TT{[VP]} \TT{S}  & $\epsilon$ & complete \\
     15 & \TT{S}  & $\epsilon$ & accept \\
     \hline
   \end{tabular}
  \caption{Left-corner parser processing the example sentence \eqref{eq:sentence}. The stack expands to the left.}\label{tab:lcpars}
\end{table}

The left-corner parser shown in \Tab{tab:lcpars} essentially operates autonomously in modes project, complete and accept, but interactively in shift mode. Thus, we can significantly simplify the parsing process through a mapping from one intermediary automaton configuration to another one that is mediated by the interactively shifted input word \cite{Wegner98}. Expressing the configurations as temporary phrase structure trees yields then the symbolic computation in \Fig{fig:lctree}.

\begin{figure}[H]
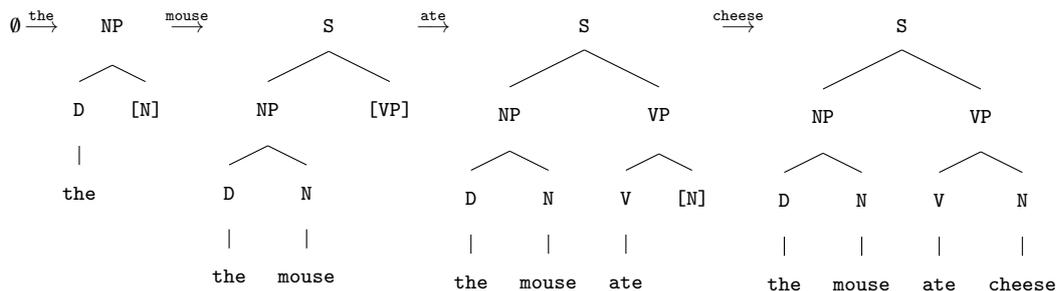

\[
 \emptyset
\stackrel{\TT{the}}{\longrightarrow}
\Tree
        [.\TT{NP}
            [.\TT{D} \TT{the}
            ].\TT{D}
            \TT{[N]}
        ].\TT{NP}
\stackrel{\TT{mouse}}{\longrightarrow}
\Tree
    [.\TT{S}
        [.\TT{NP}
            [.\TT{D} \TT{the}
            ].\TT{D}
            [.\TT{N} \TT{mouse}
            ].\TT{N}
        ].\TT{NP}
        \TT{[VP]}
    ].\TT{S}
\stackrel{\TT{ate}}{\longrightarrow}
\Tree
    [.\TT{S}
        [.\TT{NP}
            [.\TT{D} \TT{the}
            ].\TT{D}
            [.\TT{N} \TT{mouse}
            ].\TT{N}
        ].\TT{NP}
        [.\TT{VP}
            [.\TT{V} \TT{ate}
            ].\TT{V}
            \TT{[N]}
        ].\TT{VP}
    ].\TT{S}
\stackrel{\TT{cheese}}{\longrightarrow}
\Tree
    [.\TT{S}
        [.\TT{NP}
            [.\TT{D} \TT{the}
            ].\TT{D}
            [.\TT{N} \TT{mouse}
            ].\TT{N}
        ].\TT{NP}
        [.\TT{VP}
            [.\TT{V} \TT{ate}
            ].\TT{V}
            [.\TT{N} \TT{cheese}
            ].\TT{N}
        ].\TT{VP}
    ].\TT{S}
\]
  \caption{\label{fig:lctree} Interactive LC parse of the example sentence \eqref{eq:sentence}.}
\end{figure}

According to our previous definitions, the states of the processor are the automaton configurations in \Tab{tab:lcpars} or the temporary phrase structures trees in \Fig{fig:lctree}, that are both interpretable in terms of LC parsing and language processing for an informed expert observer. Moreover, the processing steps in the last column of \Tab{tab:lcpars} and also the interactive mappings \Fig{fig:lctree} are understandable and thereby explainable by the observer. In principle, one could augment the left-corner parser with a ``reasoning engine'' \cite{DoranSchulzBesold17} that translates the formal language used in those symbolic representations into everyday language. The result would be something like the (syntactic) ``meaning'' $\mng{w}$ of a word $w$ that can be regarded as the operator mapping a tree in \Fig{fig:lctree} to its successor. This interactive interpretation of meaning is well-known in \emph{dynamic semantics} \cite{Gardenfors88, GroenendijkStokhof91, Kracht02}. Therefore, symbolic AI is straightforwardly interpretable and explainable \cite{DoranSchulzBesold17}.

\subsection{Algebraic Description}
\label{sec:alg}

In order to prepare the construction of a vector symbolic architecture (VSA) \cite{Smolensky90, Mizraji89, Plate95, GrabenPotthast09a, Kanerva09, Gayler06} in the next step, we need an algebraically more sophisticated description. This is provided by the concept of a \emph{term algebra} \cite{Kracht03}. A term algebra is defined over a \emph{signature} $\Sigma = (F, \rank)$ where $F$ is a finite set of function symbols and $\rank : F \to \mathbb{N}_0$ is an arity function, assigning to each symbol $f \in F$ an integer indicating the number of arguments that $f$ has to take.

To apply this idea to a CFG, we introduce a new kind of grammar normal form that we call \emph{term normal form} in the following. A CFG $G = (T, N, \mathtt{S}, R)$ is said to be in term normal form when for every category $A \in N$ holds: if $A$ is expanded into $n\in\mathbb{N}$ rules, $r_1 : A \to \gamma_1$ to $r_n : A \to \gamma_n$, then $|\gamma_1| = \ldots = |\gamma_n|$.

It can be easily demonstrated that every CFG can be transformed into a weakly equivalent CFG in term normal form, where weak equivalence means that two different grammars derive the same context-free language. A proof is presented in Appendix \ref{sec:tnf}.

Obviously, the rules (\ref{eq:cfg1} -- \ref{eq:cfg3}) of our example above are already in term normal form, simply because they are not ambiguous. Thus, we define a term algebra by regarding the set of variables $V = N \cup T$ as signature with arity function $\rank : V \to \mathbb{N}_0$ such that i) $\rank(a) = 0$ for all $a \in T$, i.e. terminals are nullary symbols and hence constants; ii) $\rank(A) = |\gamma|$ for categories $A \in N$, that are expanded through rules $A \to \gamma$. Moreover, when $G$ is given in Chomsky normal form, for all categories $A \in N$ appearing exclusively in lexical rules $\rank(A) = 1$, i.e. lexical categories (\TT{D}, \TT{N}, \TT{V}) are unary functions. Whereas, $\rank(A) = 2$ for all categories $A \in N$ that appear exclusively in syntactic rules, which are hence binary functions.

For a general CFG $G$ in term normal form, we define the term algebra $\frak{T}(G)$ inductively: i) every terminal symbol $a \in T$ is a term, $a \in \frak{T}(G)$. ii) Let $A \in N$ be a category with $\rank(A) = k$ and let $t_0, \dots, t_{k-1} \in \frak{T}(G)$ be terms, then $A(t_0, \dots, t_{k-1}) \in \frak{T}(G)$ is a term. Additionally, we want to describe LC phrase structure trees as well. To this end, we extend the signature by the predicted categories $P = \{ [\mathtt{N}], [\mathtt{VP}] \}$, that are interpreted as constants with $\rank(C) = 0$ for $C \in P$. The enlarged term algebra is denoted by $\frak{T}_\mathrm{LC}(G)$. We also allow for $\emptyset \in \frak{T}_\mathrm{LC}(G)$.

In the LC term algebra $\frak{T}_\mathrm{LC}(G)$, we encode the tree of step 1 in \Fig{fig:lctree} (beginning with the empty tree $t_0 = \emptyset$ in step 0) as term
\begin{equation}\label{eq:tm1}
  t_1 = \TT{NP}(\TT{D}(\TT{the}), \TT{[N]})
\end{equation}
because $\rank(\TT{NP}) = 2$, $\rank(\TT{D}) = 1$, and $\rank(\TT{the}) = \rank(\TT{[N]}) = 0$. Likewise we obtain
\begin{equation}\label{eq:tm2}
  t_2 = \TT{S}(\TT{NP}(\TT{D}(\TT{the}), \TT{N}(\TT{mouse})), \TT{[VP]})
\end{equation}
as the term representation of the succeeding step 2 in \Fig{fig:lctree}.

Next, we define several partial functions over $\frak{T}_\mathrm{LC}(G)$ as follows \cite{Smolensky90, Smolensky06}.
\begin{subequations}
\begin{align}
    \cat(A(t_0, \dots, t_k)) &= A \label{eq:term1} \\
    \ext_i(A(t_0, \dots, t_k)) &= t_i \label{eq:term2} \\
    \cons_k(A, t_0, \dots, t_k) &= A(t_0, \dots, t_k) \label{eq:term3} \:.
\end{align}
\end{subequations}
Here, the function $\cat : \frak{T}_\mathrm{LC}(G) \to N$ yields the category, i.e. the function symbol $A$ of the term $A(t_0, \dots, t_k) \in \frak{T}_\mathrm{LC}(G)$. The functions $\ext_i: \frak{T}_\mathrm{LC}(G) \to \frak{T}_\mathrm{LC}(G)$ for term extraction and $\cons_k: N \times \frak{T}_\mathrm{LC}(G)^{k + 1} \to \frak{T}_\mathrm{LC}(G)$ as term constructor are defined only partially, when $A(t_0, \dots,\allowbreak t_k) \in \dom(\ext_i)$, if $k = \rank(A) - 1$ and $i < k$, as well as $(A, t_0, \dots, t_k) \in \dom(\cons_k)$, if $k = \rank(A) - 1$.

By means of the term transformations (\ref{eq:term1} -- \ref{eq:term3}) we can express the action of an incrementally and interactively shifted word $a \in T$ through a term operator $\mng{a} : \frak{T}_\mathrm{LC}(G) \to \frak{T}_\mathrm{LC}(G)$. For the transition from, e.g., LC tree 1 to LC tree 2 in \Fig{fig:lctree} we obtain
\begin{equation}\label{eq:termop}
    \mng{\TT{mouse}}(t_1) =
    \cons_2(\TT{S}, \cons_2(\cat(t_1), \ext_0(t_1), \TT{N}(\TT{mouse})), \TT{[VP]}) = t_2 \:.
\end{equation}
Therefore, the (syntactic) meaning of the word ``\TT{mouse}'' is its impact on the symbolic term algebra.

\subsection{Vector Symbolic Architectures}
\label{sec:vsa}

In vector-symbolic architectures (VSA) \cite{Smolensky90, Mizraji89, Plate95, GrabenPotthast09a, Kanerva09, Gayler06} hierarchically organized complex data structures are represented as vectors in high dimensional linear spaces. The composition of these structures is achieved by two basic operations: binding and bundling. While bundling is commonly implemented as vector superposition, i.e. addition, different VSA realize binding in particular ways: originally through tensor products \cite{Smolensky90, Mizraji89}, through circular convolution in reduced holographic representations (HRR) \cite{Plate95}, through XOR spatter code \cite{Kanerva94} or through Hadamard products \cite{LevyGayler08}. While HRR, spatter code, Hadamard products or a combination of tensor products with nonlinear compression \cite{Smolensky06} are lossy representations that require a clean-up module (usually an attractor neural network, cf. \cite{Kanerva09}), tensor product representations of basis vectors are faithful, thereby allowing interpretable and explainable VSA \cite{DoranSchulzBesold17}.

Coming back to our linguistic example, we construct a homomorphism $\psi : \frak{T}_\mathrm{LC}(G) \cup N \to \mathcal{F}$ from the term algebra unified with its categories $N$ to a vector space $\mathcal{F}$ in such a way, that the structure of the transformations (\ref{eq:term1} -- \ref{eq:term3}) is preserved. The resulting images $\psi(t)$ for terms $t \in \frak{T}_\mathrm{LC}(G)$ become vector space operators, i.e. essentially matrices acting on $\mathcal{F}$.

Again, we proceed inductively. First we map the symbols in $\frak{T}_\mathrm{LC}(G) \cup N$ onto vectors. To each atomic symbol $s \in T \cup N \cup P$ we assign a so-called \emph{filler} basis vector $\ket{s} = \psi(s) \in \mathcal{F}$, calling the subspace $\mathcal{V}_F = \spn(\psi(T \cup N \cup P))$ the filler space. Its dimension $n = \dim \mathcal{V}_F$ corresponds to the number of atomic symbols in $T \cup N \cup P$, which is $n = 13$ in our example.

Let further $m = \max(\{ |\gamma| \,|\, (A \to \gamma) \in R \})$ be the length of the largest production of grammar $G$. Then, we define $m + 1$ so-called \emph{role} vectors $\ket{i}$, spanning the role space $\mathcal{V}_R = \spn(\{ \ket{i} \,|\, 0 \le i \le m \})$. Note that we employ the so-called Dirac notation from quantum mechanics that allows a coordinate-free and hence representation-independent description here \cite{Dirac39}. Then, the role $\ket{0}$ denotes the 1st daughter node, $\ket{1}$ the 2nd daugther and so on, until the last daughter $\ket{m - 1}$. The remaining role $\ket{m}$ bounds the mother node in the phrase structure trees of grammar $G$. In our example, because $G$ has Chomsky normal form, we have $m = 2 = \dim \mathcal{V}_R - 1$ such that there are three roles for positions in a binary branching tree: left daughter $\ket{0}$, right daughter $\ket{1}$, and mother $\ket{2}$. For binary trees, we also use a more intuitive symbolic notation: left daughter $\ket{/}$, right daughter $\ket{{\setminus}}$, and mother $\ket{{\wedge}}$.

Let $A(t_0, \dots, t_k) \in \frak{T}_\mathrm{LC}(G)$ be a term. Then, we define the \emph{tensor product representation} of $A(t_0, \dots, t_k) \in \frak{T}_\mathrm{LC}(G)$ in vector space $\mathcal{F}$ recursively as follows
\begin{equation}\label{eq:tensor}
  \psi(A(t_0, \dots, t_k)) = \ket{A} \otimes \ket{m} \oplus \psi(t_0) \otimes \ket{0} \oplus \cdots
    \oplus \psi(t_k) \otimes \ket{m - 1} \:.
\end{equation}
As a shorthand notation, we suggest the Dirac expression
\begin{equation}\label{eq:tensor2}
  \ket{A(t_0, \dots, t_k)} = \ket{A} \otimes \ket{m} \oplus \ket{t_0} \otimes \ket{0} \oplus \cdots
    \oplus \ket{t_k} \otimes \ket{m - 1} \:.
\end{equation}

Here the symbol ``$\otimes$'' refers to the (Kronecker) tensor product, mapping two vectors onto another vector, in contrast to the dyadic (outer) tensor product, which yields a matrix, hence being a vector space operator. In addition, ``$\oplus$'' denotes the (outer) direct sum that is mandatory for the superposition of vectors from spaces with different dimensionality.

Obviously, the (in principle) infinite recursion of the mapping  $\psi$ leads to an infinite-dimensional representation space
\begin{equation}\label{eq:fockspace}
    \mathcal{F} =
    \bigoplus_{p = 0}^\infty \left( \mathcal{V}_F \otimes \mathcal{V}_R^{\otimes^p} \right) \oplus \mathcal{V}_R \:,
\end{equation}
that is known as \emph{Fock space} from quantum field theory \cite{GrabenPotthast09a, Fock32, Aerts09, Smolensky12}.

In quantum field theory, there is a distinguished state $\ket{\mathbf{0}} \ne 0$, the vacuum state, spanning a one-dimensional subspace, the vacuum sector that is isomorphic to the underlying number field. According to \eqref{eq:fockspace}, this sector is contained in the subspace spanned by filler and role spaces, $\mathcal{V}_F \oplus \mathcal{V}_R$. Therefore, we could represent the empty tree in \Fig{fig:lctree} by an arbitrary role; a suitable choice is the mother role $\psi(\emptyset) = \ket{m} \cong \ket{\mathbf{0}}$, hence symbolizing the vacuum state.

Using the tensor product representation \eqref{eq:tensor}, we can recursively compute the images of our example terms above. For \eqref{eq:tm1} we obtain
\begin{multline}\label{eq:tptm1}
    \ket{t_1} = \ket{\TT{NP}(\TT{D}(\TT{the}), \TT{[N]})} = \ket{\TT{NP}} \otimes \ket{2} \oplus \ket{\TT{D}(\TT{the})} \otimes \ket{0} \oplus \ket{\TT{[N]}} \otimes \ket{1} =\\
    \ket{\TT{NP}} \otimes \ket{2} \oplus
    ( \ket{\TT{D}} \otimes \ket{2} \oplus \ket{\TT{the}} \otimes \ket{0} ) \otimes \ket{0} \oplus \ket{\TT{[N]}} \otimes \ket{1} = \\
    \ket{\TT{NP}} \otimes \ket{2} \oplus
    \ket{\TT{D}} \otimes \ket{2} \otimes \ket{0} \oplus \ket{\TT{the}} \otimes \ket{0} \otimes \ket{0} \oplus \ket{\TT{[N]}} \otimes \ket{1} =\\
    \ket{\TT{NP} 2} \oplus
    \ket{\TT{D} 2 0} \oplus \ket{\TT{the} 0 0} \oplus \ket{\TT{[N]} 1} = \\
    \ket{\TT{NP} {\wedge}} \oplus
    \ket{\TT{D} {\wedge} /} \oplus \ket{\TT{the} / /} \oplus \ket{\TT{[N]} {\setminus}}  \:,
\end{multline}
where we used the compressed Dirac notation $\ket{a} \otimes \ket{b} = \ket{a b}$ in the last steps. The last line is easily interpretable in terms of phrase structure: It simply states that \TT{NP} occupies the root of the tree, \TT{D} appears as its immediate left daughter, \TT{the} is the left daughter's left daughter and a leave, and finally  \TT{[N]} is a leave bound to the right daughter of the root. Note that the Dirac kets have to be interpreted from the right to the left (reading the arabic manner). The vector $\ket{t_1}$ belongs to a Fock subspace of dimension
\begin{equation}\label{eq:fockdim}
  q  = n \frac{m^{p + 1} - 1}{m - 1} + m
\end{equation}
where $n = \dim(\mathcal{V}_F)$, $m = \dim(\mathcal{V}_R)$ and $p$ the embedding depth in the phrase structure tree step 1 of \Fig{fig:lctree}. This leads to $q_1 = 172$ for $\ket{t_1}$.

Similarly, we get for \eqref{eq:tm2}
\begin{multline}\label{eq:tptm2}
    \ket{t_2} = \ket{\TT{S}(\TT{NP}(\TT{D}(\TT{the}), \TT{N}(\TT{mouse})), \TT{[VP]})} = \\
    \ket{\TT{S}} \otimes \ket{2} \oplus \ket{\TT{NP}( \TT{D}( \TT{the} ), \TT{N}(\TT{mouse}))} \otimes \ket{0}  \oplus \ket{\TT{[VP]}} \otimes \ket{1} = \\
    \ket{\TT{S}} \otimes \ket{2} \oplus  ( \ket{\TT{NP}}  \otimes \ket{2} \oplus \ket{\TT{D}(\TT{the})} \otimes \ket{0} \oplus \ket{\TT{N}(\TT{mouse})} \otimes \ket{1} )  \otimes \ket{0}  \oplus \ket{\TT{[VP]}} \otimes \ket{1} = \\
    \ket{\TT{S}} \otimes \ket{2} \oplus \ket{\TT{NP}}  \otimes \ket{2} \otimes \ket{0}  \oplus
            \ket{\TT{D}(\TT{the})} \otimes \ket{0} \otimes \ket{0}  \oplus \ket{\TT{N}(\TT{mouse})} \otimes \ket{1} \otimes \ket{0} \oplus \ket{\TT{[VP]}} \otimes \ket{1} = \\
    \ket{\TT{S}} \otimes \ket{2} \oplus
         \ket{\TT{NP}}  \otimes \ket{2} \otimes \ket{0}  \oplus
            (\ket{\TT{D}} \otimes \ket{2} \oplus
                \ket{\TT{the}} \otimes \ket{0} ) \otimes \ket{0} \otimes \ket{0}  \oplus \\
                ( \ket{\TT{N}} \otimes \ket{2} \oplus \ket{\TT{mouse}} \otimes \ket{0} ) \otimes \ket{1} \otimes \ket{0} \oplus \ket{\TT{[VP]}} \otimes \ket{1} = \\
    \ket{\TT{S}} \otimes \ket{2} \oplus
         \ket{\TT{NP}}  \otimes \ket{2} \otimes \ket{0}  \oplus
            \ket{\TT{D}} \otimes \ket{2} \otimes \ket{0} \otimes \ket{0}  \oplus
                \ket{\TT{the}} \otimes \ket{0}  \otimes \ket{0} \otimes \ket{0}  \oplus \\
                \ket{\TT{N}} \otimes \ket{2} \otimes \ket{1} \otimes \ket{0} \oplus \ket{\TT{mouse}} \otimes \ket{0} \otimes \ket{1} \otimes \ket{0} \oplus \ket{\TT{[VP]}} \otimes \ket{1} = \\
    \ket{\TT{S} 2} \oplus \ket{\TT{NP} 2 0}  \oplus \ket{\TT{D} 2 0 0}  \oplus \ket{\TT{the} 0 0 0}  \oplus
                \ket{\TT{N} 2 1 0} \oplus \ket{\TT{mouse} 0 1 0} \oplus \ket{\TT{[VP]} 1} = \\
    \ket{\TT{S} {\wedge}} \oplus \ket{\TT{NP} {\wedge} /}  \oplus \ket{\TT{D} {\wedge} / /}  \oplus \ket{\TT{the} / / /}  \oplus
                \ket{\TT{N} {\wedge} {\setminus} /} \oplus \ket{\TT{mouse} / {\setminus} /} \oplus \ket{\TT{[VP]} {\setminus}} \:,
\end{multline}
where we have again utilized the more intuitive branching notation in the last line which can be straightforwardly interpreted in terms of tree addresses as depicted in \Fig{fig:lctree}~(step 2). Computing the dimension of the respective Fock subspace according to \eqref{eq:fockdim} yields $q_2 = 523$ for $\ket{t_2}$.

In Fock space, the interactive and incremental action of a word $a \in T$  is then represented as a matrix operator $\mng{a}_\psi : \mathcal{F} \to \mathcal{F}$. For the transition from \eqref{eq:tm1} to \eqref{eq:tm2} we obtain
\begin{multline}\label{eq:fockop}
    \mng{\TT{mouse}}_\psi \ket{t_1} =
    \mng{\TT{mouse}}_\psi (
    \ket{\TT{NP} {\wedge}} \oplus \ket{\TT{D} {\wedge} /} \oplus \ket{\TT{the} / /} \oplus \ket{\TT{[N]} {\setminus}} ) = \\
    \ket{\TT{S} {\wedge}} \oplus \ket{\TT{NP} {\wedge} /}  \oplus \ket{\TT{D} {\wedge} / /}  \oplus \ket{\TT{the} / / /}  \oplus
                \ket{\TT{N} {\wedge} {\setminus} /} \oplus \ket{\TT{mouse} / {\setminus} /} \oplus \ket{\TT{[VP]} {\setminus}} = \ket{t_2} \:.
\end{multline}

In order to prove $\psi$ a homomorphism, we define the following linear maps on $\mathcal{F}$.
\begin{subequations}
\begin{align}
    \bfcat(\ket{u}) &= (\1 \otimes \bra{m}) \ket{u} \label{eq:hom1} \\
    \bfext_i(\ket{u}) &=  (\1 \otimes \bra{i}) \ket{u} \label{eq:hom2} \\
    \bfcons_k(\ket{a}, \ket{u_0}, \dots, \ket{u_k}) &=
    \ket{a} \otimes \ket{m} \oplus \ket{u_0} \otimes \ket{0} \oplus \cdots \oplus \ket{u_k} \otimes \ket{k} \label{eq:hom3} \:,
\end{align}
\end{subequations}
here, $\1$ denotes the unit operator (i.e. the unit matrix) and the Dirac ``bra'' vectors $\bra{k}$ are linear forms from the dual role space $\mathcal{V}_R^*$ that are adjoined to the role ``ket'' vectors $\ket{k}$ such that $\braket{i}{k} = \delta_{ik}$ with Kronecker's $\delta_{ik} = 0(1)$ for $i \ne k (i = k)$.

By means of these homomorphisms we compute the meaning of ``\TT{mouse}'' as Fock space operator through
\begin{equation}\label{eq:termop}
    \mng{\TT{mouse}}_\psi \ket{t_1} =
    \bfcons_2(\ket{\TT{S}}, \bfcons_2(\bfcat(\ket{t_1}), \bfext_0(\ket{t_1}), \ket{\TT{N}(\TT{mouse})}), \ket{\TT{[VP]}}) = \ket{t_2} \:.
\end{equation}
Inserting (\ref{eq:hom1} -- \ref{eq:hom3})  yields
\begin{multline}\label{eq:fockop}
    \mng{\TT{mouse}}_\psi \ket{t_1} =
    \bfcons_2(\ket{\TT{S}}, \bfcons_2((\1 \otimes \bra{2}) \ket{t_1}, (\1 \otimes \bra{0}) \ket{t_1}, \ket{\TT{N}(\TT{mouse})}), \ket{\TT{[VP]}}) = \\
    \bfcons_2(\ket{\TT{S}},
        (\1 \otimes \bra{2}) \ket{t_1} \otimes \ket{2} \oplus
        (\1 \otimes \bra{0}) \ket{t_1} \otimes \ket{0} \oplus
        \ket{\TT{N}(\TT{mouse})} \otimes \ket{1}, \ket{\TT{[VP]}}) = \\
    \ket{\TT{S}} \otimes \ket{2} \oplus
        ((\1 \otimes \bra{2}) \ket{t_1} \otimes \ket{2} \oplus
        (\1 \otimes \bra{0}) \ket{t_1} \otimes \ket{0} \oplus
        \ket{\TT{N}(\TT{mouse})} \otimes \ket{1}) \otimes \ket{0} \oplus
         \ket{\TT{[VP]}}) \otimes \ket{1} = \\
    \ket{\TT{S}} \otimes \ket{2} \oplus
        ((\1 \otimes \bra{2}) \ket{t_1} \otimes \ket{2} \oplus
        (\1 \otimes \bra{0}) \ket{t_1} \otimes \ket{0} \oplus
        ( \ket{\TT{N}} \otimes \ket{2} \oplus \ket{\TT{mouse}} \otimes \ket{0} )
        \otimes \ket{1}) \otimes \ket{0} \oplus \\
         \ket{\TT{[VP]}}) \otimes \ket{1} = \ket{t_2}  \:,
\end{multline}
where we have expanded $\ket{\TT{N}(\TT{mouse})}$ as in \eqref{eq:tptm2} above. Note that the meaning of ``\TT{mouse}'' crucially depends on the given state $\ket{t_1}$ subjected to the operator $\mng{\TT{mouse}}_\psi$, making meaning highly contextual. This is an important feature of dynamic semantics as well \cite{Gardenfors88, GroenendijkStokhof91, Kracht02}.

\section{Results}
\label{sec:res}

The main result of this study is a Fock space representation theorem for vector symbolic architectures of context-free grammars that follows directly from the definitions (\ref{eq:hom1} -- \ref{eq:hom3}) and is proven in Appendix \ref{sec:rethe}.

The tensor product representation $\psi : \frak{T}_\mathrm{LC}(G) \cup N \to \mathcal{F}$ is a homomorphism with respect to the term transformations (\ref{eq:term1} -- \ref{eq:term3}). It holds
\begin{subequations}
\begin{align}
    \bfcat(\ket{A(t_0, \dots, t_k)}) &= \ket{\cat(A(t_0, \dots, t_k))} \label{eq:theo1} \\
    \bfext_i(\ket{A(t_0, \dots, t_k)}) &= \ket{\ext_i(A(t_0, \dots, t_k))}  \label{eq:theo2} \\
    \bfcons_k(\ket{A}, \ket{t_0}, \dots, \ket{t_k})
    &= \ket{\cons_k(A, t_0, \dots, t_k)} \label{eq:theo3} \:.
\end{align}
\end{subequations}

For the particular example discussed above, we obtain the Fock space trajectory in \Tab{tab:fockpars}.

\begin{table}[H]
  \centering
  \begin{tabular}{llll}
     \hline
     \# & Fock vector & dim & operation \\
     \hline
     0 & $\ket{{\wedge}}$ & 16 & shift \TT{the} \\
     1 & $\ket{\TT{D}  {\setminus} {\wedge}  /  } \oplus \ket{\TT{NP}  {\setminus} {\wedge}   } \oplus \ket{\TT{[N]}  {\setminus}  } \oplus \ket{\TT{the}  /  /}$ & 172 & shift \TT{mouse} \\
     2 & $\ket{\TT{D}  {\setminus} {\wedge}  /  /    } \oplus     \ket{\TT{NP}  {\setminus} {\wedge}  /     } \oplus     \ket{\TT{N}  {\setminus} {\wedge}  {\setminus}  /    } \oplus     \ket{\TT{S}  {\setminus} {\wedge}        }\oplus     \ket{\TT{[VP]}  {\setminus}     }\oplus     \ket{\TT{mouse}  /  {\setminus}  /}\oplus     \ket{\TT{the}  /  /  /  }$ & 523 & shift \TT{ate} \\
     3 & $\ket{\TT{D}  {\setminus} {\wedge}  /  /    }\oplus     \ket{\TT{NP}  {\setminus} {\wedge}  /     }\oplus     \ket{\TT{N}  {\setminus} {\wedge}  {\setminus}  /    }\oplus     \ket{\TT{S}  {\setminus} {\wedge}        }\oplus     \ket{\TT{VP}  {\setminus} {\wedge}  {\setminus}     }\oplus  \ket{\TT{V}  {\setminus} {\wedge}  /  {\setminus}    }\oplus     \ket{\TT{[N]}  {\setminus}  {\setminus}    }\oplus    \ket{\TT{ate}  /  /  {\setminus}  }\oplus     \ket{\TT{mouse}  /  {\setminus}  /}\oplus     \ket{\TT{the}  /  /  /  }$  & 523 & shift \TT{cheese} \\
     4 & $\ket{\TT{D}  {\setminus} {\wedge}  /  /     }\oplus     \ket{\TT{NP}  {\setminus} {\wedge}  /      }\oplus     \ket{\TT{N}  {\setminus} {\wedge}  {\setminus}  /     }\oplus     \ket{\TT{N}  {\setminus} {\wedge}  {\setminus}  {\setminus}     }\oplus    \ket{\TT{S}  {\setminus} {\wedge}         }\oplus
    \ket{\TT{VP}  {\setminus} {\wedge}  {\setminus}      }\oplus     \ket{\TT{V}  {\setminus} {\wedge}  /  {\setminus}     }\oplus     \ket{\TT{ate}  /  /  {\setminus}   }\oplus     \ket{\TT{cheese}  /  {\setminus}  {\setminus}}\oplus     \ket{\TT{mouse}  /  {\setminus}  / }\oplus     \ket{\TT{the}  /  /  /   }$  & 523 & accept \\
     \hline
   \end{tabular}
  \caption{Fock space representation of LC parser processing the example sentence \eqref{eq:sentence}.}\label{tab:fockpars}
\end{table}

Moreover, we present the complete Fock space LC parse generated by FockBox which is a MATLAB toolbox provided by \citet{WolffWirschingEA18} as its three-dimensional projection after principal component analysis (PCA \cite{WolffWirschingEA18}) in \Fig{fig:pcatraj} as illustration.

\begin{figure}[H]
\centering
\includegraphics[width=0.7\textwidth]{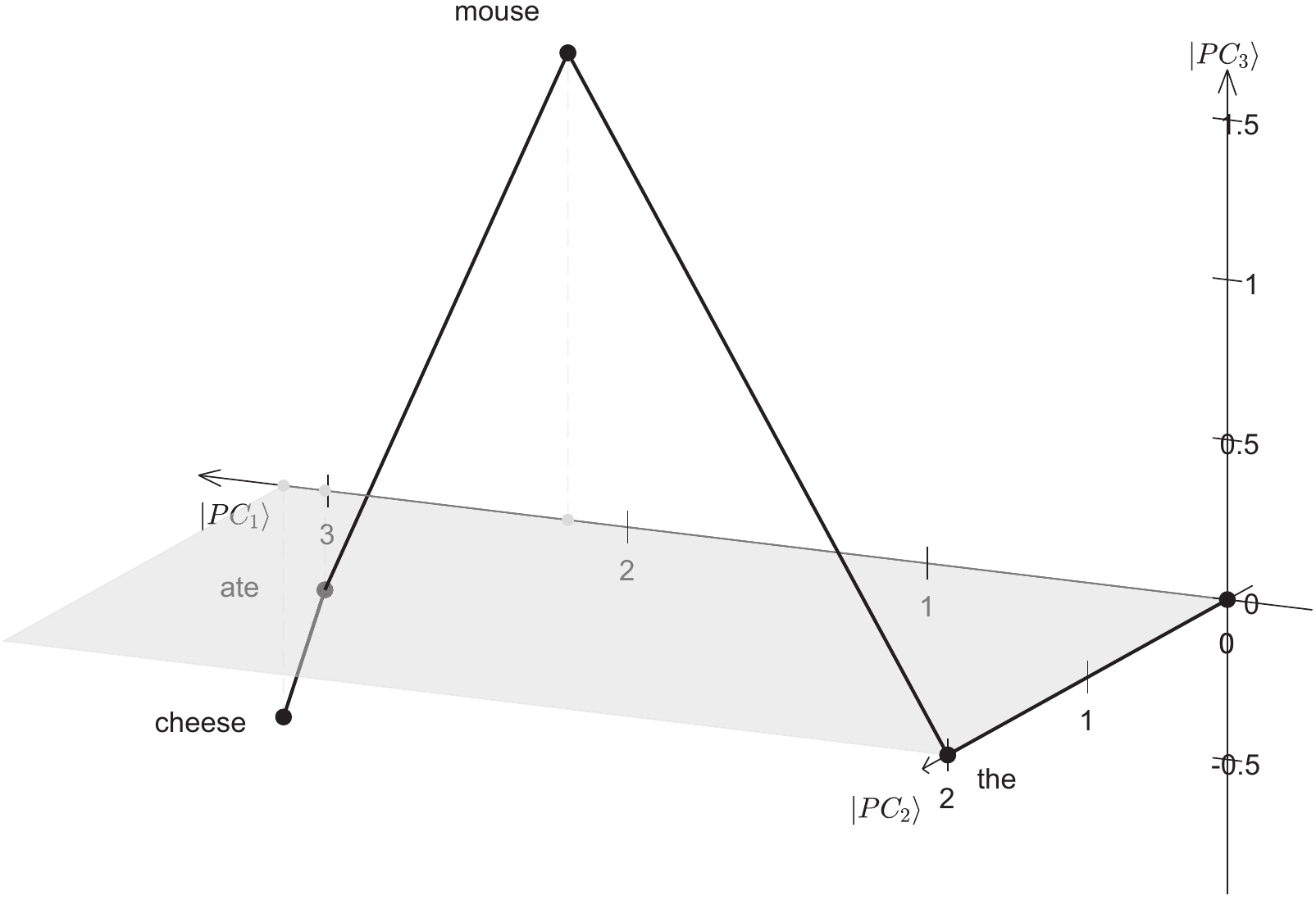}
 \caption{\label{fig:pcatraj} Principal component (PC) projection of the LC parser's Fock space representation. Shown are the first three PCs.}
\end{figure}

\section{Discussion}
\label{sec:disc}

In this article we developed a representation theory for context-free grammars and push-down automata in Fock space as a vector symbolic architecture (VSA). We presented rigorous proofs for the representations of suitable term algebras. To this end, we suggested a novel normal form for CFG allowing to express CFG parse trees as terms over a symbolic term algebra. Rule-based derivations over that algebra are then represented as transformation matrices in Fock space.

Motivated by a seminal study of \citet{Shannon53} on cognitive dynamic systems \cite{Haykin12}, our work could be of significance for levering research on \emph{cognitive user interfaces} (CUI) \cite{Young10, HuberEA18}. Such systems are subject of ambitious current research. Instead of using keyboards and displays as input-output interfaces, users pronounce requests or instructions to a device as spoken language and listen to its uttered responses. To this aim, state-of-the-art language technology scans the acoustically analyzed speech signal for relevant keywords that are subsequently inserted into semantic frames \cite{Minsky74} to interpret the user's intent. This \emph{slot filling} procedure \cite{Allen03, TurHakkaniEA11, MesnilDauphinEA15} is based on large language corpora that are evaluated by machine learning methods, such as deep learning of neural networks \cite{CunBengioHinton15, Schmidhuber15, MesnilDauphinEA15}. The necessity to overcome traditional slot filling techniques by proper semantic analyses technologies has already been emphasized by \citet{Allen17}. His research group trains semantic parsers from large language data bases such as WordNet or VerbNet that are constrained by hand-crafted expert knowledge and semantic ontologies \cite{Allen03, AllenBahkshandehEA18}.

Another road toward realistic CUI systems is the development of utterance-meaning transducers (UMT) that map syntactic representations obtained from the speech signal onto semantic representations in terms of feature value relations (FVR) \cite{Karttunen84, HuberEA18}. This is achieved through a perception action cycle, comprising the three components: perception, action and behavior control. The perception module transforms the input from the signal layer to the semantic symbolic layer, the module for behavior control solves decision problems based on semantic information and computes appropriate actions. Finally, the action module executes the result by producing acoustic feedback. Behavior control can flexibly adapt to user's demands through reinforcement learning.

For the implementation of rule-based symbolic computations in cognitive dynamic systems, such as neural networks, VSA provide a viable approach. Our results contribute a formally sound basis for this kind of future research and engineering. In contrast to current black-box approaches, our method is essentially transparent and hence explainable and trustworthy \cite{Marcus20, DoranSchulzBesold17}.

\section{Conclusion}
\label{sec:conc}

We reformulated context-free grammars (CFG) through term algebras and their processing through push-down automata by partial functions over term algebras. We introduced a novel normal form for CFG, called term normal form, and proved that any CFG in Chomsky normal form can be transformed into term normal form. Finally, we introduced a vector symbolic architecture (VSA) by assigning basis vectors of a high-dimensional linear space to the respective symbols and their roles in a phrase structure tree. We suggested a recursive function for mapping CFG phrase structure trees onto representation vectors in Fock space and proved a representation theorem for the partial rule-based processing functions. We illustrated our findings by an interactive left-corner parser and used FockBox, a freely accessible MATLAB toolbox, for the generation and visualization of Fock space VSA. Our approach directly encodes symbolic, rule-based knowledge into the hyperdimensional computing framework of VSA and can thereby supply substantial insights into the future development of explainable artifical intelligence (XAI).


\section*{Compliance with Ethical Standards}

Ethical approval: This article does not contain any studies with human participants or animals performed by any of the authors.

Conflict of interest: The authors declare that they have no conflict of interest.

\bibliographystyle{unsrtnat}


\section{Appendix}
\label{sec:apx}

\subsection{Proof of term normal form}
\label{sec:tnf}

\begin{definition}[Context-free grammar]\label{def:cfg}
  A \emph{context-free grammar (CFG)} is a quadruple \(G = (T, N, \mathtt{S},
  R)\) with a set of \emph{terminals} \(T\), a set of \emph{nonterminals}
  \(N\), the \emph{start symbol} \(\mathtt{S}\in N\) and a set of rules
  \(R\subseteq N\times (N \cup T)^*\)\@. A rule \(r=(A, \gamma)\in R\)
  is usually written as a production \(r:A\to \gamma\)\@.
\end{definition}

\begin{definition}[Chomsky normal form]\label{def:cnf}
  According to \cite{HopcroftUllman79} a CFG \(G = (T, N, \mathtt{S}, R)\) is
  said to be in \emph{Chomsky normal form} iff every production \(r \in R\) is one of
  \begin{subequations}
    \begin{eqnarray}
      A & \to & B \, C\\
      A & \to & a\\
      \mathtt{S} & \to & \epsilon\label{eq:empty-string-production}
    \end{eqnarray}
  \end{subequations}
  with \(A\in N\),
  \(B, C\in N\setminus \{\mathtt{S}\}\) and
  \(a\in T\)\@.
\end{definition}

It is a known fact, that for every CFG \(G\) there is an equivalent CFG \(G'\) in Chomsky normal
form \cite{HopcroftUllman79}\@. It is also known that if \(G\) does not produce the empty string
--- absence of production \eqref{eq:empty-string-production} --- then there is an equivalent CFG
\(G'\) in Chomsky reduced form \cite{HopcroftUllman79}\@.

\begin{definition}[Chomsky reduced form]\label{def:crf}
  A CFG \(G = (T, N, \mathtt{S}, R)\) is said to be in \emph{Chomsky reduced
    form} iff every production \(r\in R\) is one of
  \begin{subequations}
    \begin{eqnarray}
      A & \to & B \, C \label{eq:length-two-production}\\
      A & \to & a\label{eq:length-one-production}
    \end{eqnarray}
  \end{subequations}
  with \(A, B, C\in N\) and \(a\in T\)\@.
\end{definition}

By utilizing some of the construction steps for establishing Chomsky normal form from
\cite{HopcroftUllman79} we deduce
\begin{corollary}\label{cor:crf-cnf}
  For every CFG \(G\) in Chomsky reduced form there is an equivalent CFG \(G'\) in Chomsky
  normal form without a rule corresponding to production \eqref{eq:empty-string-production}\@.
\end{corollary}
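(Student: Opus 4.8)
The plan is to start from the observation that a grammar in Chomsky reduced form (\Def{def:crf}) already satisfies almost all the requirements of Chomsky normal form (\Def{def:cnf}): every production is either of the form \(A \to BC\) or \(A \to a\), and since reduced form admits no \(\epsilon\)-productions at all, the grammar certainly contains no rule of type \eqref{eq:empty-string-production}. Hence the only possible mismatch is the side condition in Chomsky normal form requiring both nonterminals on the right-hand side of \eqref{eq:length-two-production} to lie in \(N \setminus \{\mathtt{S}\}\), i.e. that the start symbol may not occur on any right-hand side. The whole task therefore reduces to removing \(\mathtt{S}\) from the right-hand sides while preserving the generated language.

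To achieve this I would apply the standard fresh-start-symbol construction. Choose a new symbol \(\mathtt{S}_0 \notin N\), set \(N' = N \cup \{\mathtt{S}_0\}\), and define \(R' = R \cup \{\,\mathtt{S}_0 \to \gamma \mid (\mathtt{S} \to \gamma) \in R\,\}\); that is, copy every production of the old start symbol onto the new one while leaving all original productions untouched, giving \(G' = (T, N', \mathtt{S}_0, R')\). Because the copied rules are literal duplicates of reduced-form rules, every production of \(G'\) is again of shape \eqref{eq:length-two-production} or \eqref{eq:length-one-production}; and since \(\mathtt{S}_0\) is introduced only as a left-hand side and never inserted into any right-hand side, no production violates \(B, C \in N' \setminus \{\mathtt{S}_0\}\). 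The absence of \(\epsilon\)-rules is inherited from \(G\), so \(G'\) is in Chomsky normal form without a rule of type \eqref{eq:empty-string-production}.

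It then remains to verify weak equivalence, \(L(G') = L(G)\). The key point is that the productions governing every old nonterminal \(A \in N\) are identical in \(R\) and \(R'\), so any derivation \(\gamma \Rightarrow^* w\) using only symbols of \(N \cup T\) is a derivation in one grammar iff it is a derivation in the other. A derivation in \(G'\) must begin with a step \(\mathtt{S}_0 \to \gamma\) for some \((\mathtt{S} \to \gamma) \in R\), after which \(\mathtt{S}_0\) never reappears; replacing this first step by \(\mathtt{S} \to \gamma\) yields a \(G\)-derivation of the same terminal string, and the reverse substitution maps \(G\)-derivations back into \(G'\). An induction on derivation length turns this correspondence into both inclusions.

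I expect no genuine obstacle here: the only conceptual subtlety is recognizing that the sole discrepancy between the two normal forms is the start-symbol-on-the-right restriction, after which the classical trick settles everything. The one point deserving care is confirming that the fresh symbol truly never contaminates a right-hand side — which holds precisely because we copy the rules of \(\mathtt{S}\) onto \(\mathtt{S}_0\) rather than adding a unit rule \(\mathtt{S}_0 \to \mathtt{S}\), the latter being forbidden in normal form.
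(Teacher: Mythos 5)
Your proposal is correct and follows essentially the same route as the paper: both identify the start-symbol-on-the-right restriction as the only obstacle and resolve it by introducing a fresh start symbol \(\mathtt{S}_{0}\) with copied rules \(\{(\mathtt{S}_{0},\gamma)\mid(\mathtt{S},\gamma)\in R\}\). Your additional verification of weak equivalence by induction on derivation length merely spells out what the paper leaves implicit.
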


\begin{proof}
  Let \(G\) be a CFG in Chomsky reduced form\@. Clearly \(G\) does not produce the empty
  string\@. The only difference to Chomsky normal form is the allowed presence of the start
  symbol \(\mathtt{S}\) on the right-hand side of rules in \(R\)\@. By introducing a new start
  symbol \(\mathtt{S}_{0}\) and inserting rules
  \(\{(\mathtt{S}_{0}, \gamma)\mid \exists (\mathtt{S}, \gamma)\in R\}\) we eliminate this
  presence and obtain an equivalent CFG in Chomsky normal form without a production of form
  \eqref{eq:empty-string-production}\@. \qed
\end{proof}

\begin{definition}[Term normal form]\label{def:tnf}
  A CFG \(G = (T, N, \mathtt{S}, R)\) is said to be in \emph{term normal form}
  iff \(R\subseteq N\times (N \cup T)^{+}\) and for every two rules
  \(r=(A, \gamma)\in R\) and \(r'=(A', \gamma')\in R\)
  \begin{equation*}
    A=A'\implies |\gamma|=|\gamma'|
  \end{equation*}
  holds\@.
\end{definition}

We state and proof by construction:
\begin{theorem}\label{thm:cfg-tnf}
  For every CFG \(G = (T, N, \mathtt{S}, R)\) not producing the empty string
  there is an equivalent CFG \(G'\) in term normal form\@.
\end{theorem}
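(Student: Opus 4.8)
The plan is to reduce to a maximally structured intermediate grammar and then separate, at the level of nonterminals, the productions of different right-hand side lengths. Since $G$ does not produce the empty string, the known fact recalled just before \Def{def:crf} supplies an equivalent CFG --- which I again call $G = (T, N, \mathtt{S}, R)$ --- in Chomsky reduced form, so every rule is either of length two, $A \to BC$ as in \eqref{eq:length-two-production}, or of length one, $A \to a$ as in \eqref{eq:length-one-production}. Term normal form (\Def{def:tnf}) can then fail only because some nonterminal $A$ carries both a length-one and a length-two production. One cannot repair this by padding the length-one rules up to length two: the absence of $\epsilon$-productions forces every derivation of a single terminal to use a genuine length-one step, so the cure must instead guarantee that \emph{no single nonterminal mixes the two lengths}.

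To achieve this I would split every nonterminal according to the kind of rule it applies. Introduce fresh symbols $A^{(1)}, A^{(2)}$ for each $A \in N$ together with a new start symbol $\mathtt{S}_0$, and set $N' = \{\mathtt{S}_0\} \cup \{A^{(1)}, A^{(2)} : A \in N\}$. The rule set $R'$ is built by: (i) turning each lexical rule $A \to a$ into $A^{(1)} \to a$; (ii) turning each branching rule $A \to BC$ into the four rules $A^{(2)} \to B^{(s)} C^{(t)}$ for $s,t \in \{1,2\}$, letting each daughter independently decide whether it will itself branch or terminate; and (iii) adding the start rules $\mathtt{S}_0 \to \mathtt{S}^{(1)}$ and $\mathtt{S}_0 \to \mathtt{S}^{(2)}$, retaining only those whose target actually carries rules. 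By construction $A^{(1)}$ has only length-one right-hand sides, $A^{(2)}$ only length-two ones, and $\mathtt{S}_0$ only length-one ones, while every right-hand side is nonempty; hence $R' \subseteq N' \times (N' \cup T)^+$ and the length condition of \Def{def:tnf} holds separately for each left-hand side. Thus $G' = (T, N', \mathtt{S}_0, R')$ is in term normal form.

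It then remains to establish $L(G') = L(G)$, which I would do through a correspondence of derivation trees. Given a $G$-derivation tree for a word $w$, relabel each internal node $X$ by $X^{(1)}$ or $X^{(2)}$ according to whether it applied a lexical or a branching rule, and attach a new root $\mathtt{S}_0$; each relabeled local tree is then exactly one of the rules in (i)--(iii), so the result is a $G'$-tree with the same yield, giving $L(G) \subseteq L(G')$. Conversely, in any $G'$-tree every $X^{(1)}$-node has a single terminal daughter and every $X^{(2)}$-node has two nonterminal daughters; erasing the superscripts and contracting the $\mathtt{S}_0$-root produces a legal $G$-tree with the same frontier, which yields the reverse inclusion.

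The shape checks that certify term normal form are routine. The main obstacle is making the tree correspondence exact: one must verify that the independent superscript choices in step (ii) reproduce precisely the admissible daughter derivations of $G$, and that inserting $\mathtt{S}_0$ neither adds nor deletes strings. This is also where one notes that rules $A^{(2)} \to B^{(s)} C^{(t)}$ whose target $B^{(s)}$ or $C^{(t)}$ happens to carry no rules are simply unproductive, hence harmless and outside every completed derivation. An induction on tree height, equivalently on derivation length, makes both inclusions precise and completes the argument.
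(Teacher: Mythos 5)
Your construction is correct and follows the same basic strategy as the paper's proof: pass to an equivalent grammar in Chomsky reduced form, then disambiguate nonterminals according to the length of the right-hand sides they expand into, adding a fresh start symbol with unit productions to cover the start symbol itself. The difference is one of economy versus uniformity. The paper splits only the nonterminals in the ``mixed'' set $D$ (those carrying rules of both forms \eqref{eq:length-two-production} and \eqref{eq:length-one-production}) into $A'$ and $A''$, and then patches every occurrence of such an $A$ in first or second position of other rules' right-hand sides; you split \emph{every} nonterminal into $A^{(1)}$ and $A^{(2)}$ and replace each binary rule by four variants $A^{(2)} \to B^{(s)} C^{(t)}$, accepting some unproductive rules as harmless dead ends. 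Your version is more uniform and arguably easier to verify; the paper's touches fewer symbols. More importantly, you actually carry out (at least in outline) the language-equivalence argument via the derivation-tree relabelling in both directions, whereas the paper's proof consists solely of the construction and never verifies that $L(G') = L(G)$; your induction on tree height is exactly the missing ingredient, so on this point your write-up is the more complete of the two.
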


\begin{proof}
  Let \(G = (T, N, \mathtt{S}, R)\) be a CFG not producing the empty
  string\@. Let \(G' = (T, N', \mathtt{S}, R')\) be the equivalent CFG in
  Chomsky reduced form and \(D\subseteq N'\) be the set of all nonterminals
  from \(G'\) which have productions of both forms \eqref{eq:length-two-production} and
  \eqref{eq:length-one-production}\@.

  We establish term normal form by applying the following transformations to \(G'\):
  \begin{enumerate}
  \item For every nonterminal \(A\in D\) let
    \(R_{A}''=\{(A, B \, C)\in R'\mid B,C\in N'\}\) be the rules
    corresponding to productions of form \eqref{eq:length-two-production} and
    \(R_{A}'=\{(A, a)\in R'\mid a\in T\}\) be the rules
    corresponding to productions of form \eqref{eq:length-one-production}\@. We add
    \begin{enumerate}
    \item new nonterminals \(A''\) and \(A'\),
    \item a new rule \((A'', B \, C)\) for every rule
      \((A, B \, C)\in R_{A}''\) and
    \item a new rule \((A', a)\) for every rule
      \((A, a)\in R_{A}'\)\@.
    \end{enumerate}
    Finally, we remove all rules \(R_{A}''\cup R_{A}'\) from \(R'\)\@.
  \item For every nonterminal \(A\in D\) let
    \(L_{A}=\{(X, A \, Y)\in R'\mid X, Y\in
    N'\}\) be the set of rules where \(A\) appears at first position on the
    right-hand side\@. For every rule \((X, A \, Y)\in L_{A}\) we add
    \begin{enumerate}
    \item a new rule \((X, A'' \, Y)\) and
    \item a new rule \((X, A' \, Y)\)\@.
    \end{enumerate}
    Finally, we remove all rules \(L_{A}\) from \(R'\)\@.
  \item For every nonterminal \(A\in D\) let
    \(R_{A}=\{(X, Y \, A)\in R'\mid X, Y\in
    N'\}\) be the set of rules where \(A\) appears at second position on the
    right-hand side\@. For every rule \((X, Y \, A)\in R_{A}\) we add
    \begin{enumerate}
    \item a new rule \((X, Y \, A'')\) and
    \item a new rule \((X, Y \, A')\)\@.
    \end{enumerate}
    Finally, we remove all rules \(R_{A}\) from \(R'\)\@.
  \item\label{step:S-in-D} If \(\mathtt{S}\in D\) then we add
    \begin{enumerate}
    \item a new start symbol \(\mathtt{S}_{0}\),
    \item a new rule \((\mathtt{S}_{0}, \mathtt{S}')\) and
    \item a new rule \((\mathtt{S}_{0}, \mathtt{S}'')\)\@.
    \end{enumerate}
  \item Finally, we remove \(D\) from \(N'\)\@. \qed
  \end{enumerate}
\end{proof}

We immediately deduce
\begin{corollary}\label{cor:l1l2->tnf+cnf}
  For every CFG \(G\) only producing strings of either exactly length \(1\) or at least length
  \(2\) there is an equivalent CFG \(G'\) in term normal form which is also in Chomsky normal
  form\@.
\end{corollary}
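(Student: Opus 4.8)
The plan is to produce $G'$ by chaining the two constructions already in hand: the passage to term normal form of Theorem~\ref{thm:cfg-tnf} and the lift from Chomsky reduced form to Chomsky normal form of Corollary~\ref{cor:crf-cnf}, and then to check that neither step undoes the property granted by the other. First I would fix the reading of the hypothesis at the level of the generated language: it asserts that \emph{either} every string derived by $G$ has length exactly $1$, \emph{or} every such string has length at least $2$. This is the reading that makes the statement true, since a language containing both a one-symbol word and a word of length $\ge 2$ admits no grammar in both forms at once: in Chomsky normal form the start symbol $\mathtt{S}$ never occurs on a right-hand side, so a length-$1$ word forces a production $\mathtt{S}\to a$ while a longer word forces $\mathtt{S}\to B\,C$, and these two productions already violate the equal-length condition of term normal form. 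In either admissible case the hypothesis forbids the empty string, so by the cited result an equivalent $G'$ in Chomsky reduced form exists, and I would take it to have only useful (reachable and generating) nonterminals.

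The key step is to show that in this reduced $G'$ the start symbol $\mathtt{S}$ is \emph{not} a member of the set $D$ of nonterminals carrying productions of both length $1$ and length $2$. If every word of $L(G)$ has length $1$, then a production $\mathtt{S}\to B\,C$ with $B,C$ generating would derive a word of length $\ge 2$, a contradiction, so $\mathtt{S}$ carries only length-$1$ productions. Symmetrically, if every word has length $\ge 2$, a production $\mathtt{S}\to a$ would place the one-symbol word $a$ in $L(G)$, which is impossible, so $\mathtt{S}$ carries only length-$2$ productions. Hence $\mathtt{S}\notin D$, and therefore the construction of Theorem~\ref{thm:cfg-tnf} never executes its start-symbol step~\ref{step:S-in-D}. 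All the remaining steps merely split non-start nonterminals into $A',A''$ and duplicate the rules in which they occur, so every surviving production still has one of the shapes $A\to B\,C$ or $A\to a$; the term normal form grammar produced is thus simultaneously in Chomsky reduced form.

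Finally I would lift this grammar to Chomsky normal form by invoking Corollary~\ref{cor:crf-cnf}, which adjoins a fresh start symbol $\mathtt{S}_0$ and copies every production $\mathtt{S}\to\gamma$ as $\mathtt{S}_0\to\gamma$. Since $\mathtt{S}_0$ occurs on no right-hand side, the normal-form restriction on the start symbol is met. Crucially, term normal form guarantees that all productions of $\mathtt{S}$ share one common length, so all the copied productions of $\mathtt{S}_0$ inherit exactly that length while every other nonterminal is untouched; term normal form therefore survives the lifting. This yields a $G'$ that is at once in term normal form and in Chomsky normal form and, by composing the three equivalences used, weakly equivalent to $G$.

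The main obstacle, and the reason the hypothesis is phrased the way it is, is the start symbol: it is the unique nonterminal that could legitimately be required to generate words of two different lengths, and Theorem~\ref{thm:cfg-tnf} accommodates that possibility only through step~\ref{step:S-in-D}, which inserts unit productions $\mathtt{S}_0\to\mathtt{S}'$ and $\mathtt{S}_0\to\mathtt{S}''$ that lie outside Chomsky normal form and whose elimination would re-mix the two production lengths at $\mathtt{S}_0$. The whole argument therefore hinges on excluding $\mathtt{S}\in D$, which is precisely what the correctly read hypothesis secures; everything else is routine bookkeeping on the shapes of productions.
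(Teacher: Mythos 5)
Your argument is correct and follows essentially the same route as the paper: pass to Chomsky reduced form, observe that the hypothesis forces \(\mathtt{S}\notin D\) so that step~\ref{step:S-in-D} of the construction in Theorem~\ref{thm:cfg-tnf} never fires, and combine this with Corollary~\ref{cor:crf-cnf}. The only differences are cosmetic --- you apply the term-normal-form construction before the lift to Chomsky normal form (checking that the lift preserves term normal form) whereas the paper applies it after, and you treat the two length cases uniformly where the paper separates the length-\(1\) case; your version is, if anything, more explicit about why \(\mathtt{S}\notin D\).
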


\begin{proof}
  We handle the two cases separately\@.
  \begin{case}
    Let \(G\) be a CFG producing strings of exactly length \(1\)\@. Since \(G\) does not
    produce the empty string there is an equivalent CFG \(G'\) in Chomsky reduced form where
    every rule is of form \eqref{eq:length-one-production} and the only nonterminal being the
    start symbol\@. Obviously, \(G'\) is in Chomsky normal form and also in term normal form\@.
  \end{case}
  \begin{case}
    Let \(G\) be a CFG producing strings of at least length \(2\)\@. Since \(G\) does not
    produce the empty string there is an equivalent CFG in Chomsky reduced form and from
    corollary~\ref{cor:crf-cnf} follows that there is an equivalent CFG in Chomsky normal
    form\@. Applying the construction from theorem~\ref{thm:cfg-tnf} to this CFG leads to a CFG
    \(G'\) in term normal formal\@. Since \(G\) does not produce strings of length \(1\)
    step~\ref{step:S-in-D} is omitted by the construction and \(G'\) stays in Chomsky normal
    form\@. \qed
  \end{case}
\end{proof}

We also state the opposite direction\@.

\begin{corollary}\label{cor:tnf+cnf->l1l2}
  Every CFG \(G\) for which an equivalent CFG \(G'\) in Chomsky normal form exists which is also
  in term normal form, produces either only strings of length \(1\) or at least of length
  \(2\)\@.
\end{corollary}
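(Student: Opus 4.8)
The plan is to exploit the two structural constraints on the equivalent grammar \(G'\) simultaneously. Being in Chomsky normal form forces every production to have a right-hand side of length exactly \(1\) or \(2\) --- the \(\epsilon\)-production \eqref{eq:empty-string-production} being excluded because term normal form requires \(R\subseteq N\times(N\cup T)^{+}\) --- while being in term normal form forces all productions sharing a given left-hand side to have equal right-hand-side length. Applying the latter condition to the start symbol \(\mathtt{S}\) shows that \(\mathtt{S}\) is length-homogeneous: either every \(\mathtt{S}\)-production has the form \(\mathtt{S}\to a\), or every \(\mathtt{S}\)-production has the form \(\mathtt{S}\to B\,C\). Since \(G\) and \(G'\) are weakly equivalent and hence generate the same language, it suffices to analyse these two cases for \(G'\).

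First I would record the auxiliary fact that, in the absence of any \(\epsilon\)-production, every nonterminal of \(G'\) derives only nonempty terminal strings; this follows by a routine induction on derivation length, since each right-hand side has length at least \(1\). In the first case, every derivation from \(\mathtt{S}\) begins by applying some rule \(\mathtt{S}\to a\), which already yields a terminal and leaves no nonterminal to expand, so every generated string has length exactly \(1\). In the second case, every derivation begins with a step \(\mathtt{S}\Rightarrow B\,C\) where, by \Def{def:cnf}, \(B,C\in N'\setminus\{\mathtt{S}\}\); the auxiliary fact then guarantees that each of \(B\) and \(C\) derives a string of length at least \(1\), whence every generated string has length at least \(2\).

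The main obstacle is less a genuine mathematical difficulty than a matter of pinning down the conventions so that the dichotomy is exhaustive. One must confirm that the term normal form condition really excludes \(\epsilon\) via \(R\subseteq N\times(N\cup T)^{+}\), and that the Chomsky normal form convention in \Def{def:cnf} keeps \(\mathtt{S}\) off every right-hand side, so that the two cases for the common length of the \(\mathtt{S}\)-productions are both exhaustive and mutually exclusive and no third behaviour --- such as a mixed-length or empty output --- can occur. Once these points are secured, the case split closes the argument, and transferring the conclusion from \(G'\) back to \(G\) is immediate from weak equivalence.
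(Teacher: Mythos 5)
Your proposal is correct and follows essentially the same route as the paper's own proof: use the term normal form condition to show that all \(\mathtt{S}\)-productions share a common right-hand-side length, then split into the two cases \(\mathtt{S}\to a\) and \(\mathtt{S}\to B\,C\) permitted by Chomsky normal form. You are in fact somewhat more careful than the paper, which leaves implicit both the auxiliary induction showing every nonterminal derives only nonempty strings and the transfer from \(G'\) back to \(G\) via weak equivalence.
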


\begin{proof}
  Let \(G = (T, N, \mathtt{S}, R)\) be a CFG in Chomsky normal form and term
  normal form at the same time\@. Clearly, \(G\) does not produce the empty string\@. Let
  \(R|_{\mathtt{S}}\subseteq R\) be the set of rules with the start symbols \(\mathtt{S}\) on the
  left side\@. Since \(G\) is in term normal form we have to consider the following two cases\@.
  \begin{case}
    Let \((\mathtt{S}, \gamma)\in R\) be a rule where \(\gamma\in T\)\@. Then every
    rule in the set \(R|_{\mathtt{S}}\) has to be of the same form\@. It follows that \(G\) only
    produces strings of length \(1\)\@.
  \end{case}
  \begin{case}
    Let \((\mathtt{S}, A \, B)\in R\) be a rule with \(A,B\in \mathtt{N}\)\@. Then every rule in
    the set \(R|_{\mathtt{S}}\) has to be of the same form\@. It follows that strings produced by
    \(G\) have to be at least of length \(2\)\@. \qed
  \end{case}
\end{proof}

We instantly deduce
\begin{theorem}
  Those CFGs for which a Chomsky normal form in term normal exists are exactly the CFGs
  producing either only strings of length \(1\) or strings with at least length \(2\)\@.
\end{theorem}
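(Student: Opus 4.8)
The plan is to recognize that this final statement is simply the biconditional obtained by pairing the two preceding corollaries, so the work reduces to checking that they fit together without a gap. First I would unfold the claim into its two implications, keeping in mind that the target property --- \emph{produces either only strings of length $1$ or only strings of length at least $2$} --- is a property of the generated language $L(G)$ and is therefore invariant under weak equivalence. This makes the biconditional well-posed, since passing to a weakly equivalent grammar $G'$ cannot change which length regime the generated strings fall into.

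For the direction asserting that the length property is \emph{sufficient} for the existence of a grammar that is simultaneously in Chomsky normal form and in term normal form, I would invoke Corollary~\ref{cor:l1l2->tnf+cnf} verbatim: any CFG producing only length-$1$ strings, or only strings of length at least $2$, admits a weakly equivalent grammar lying in both normal forms. For the converse --- that the mere existence of such a joint normal form \emph{forces} the strings into one of these two regimes --- I would invoke Corollary~\ref{cor:tnf+cnf->l1l2}, whose case analysis on the start-symbol productions shows that term normal form pins all $\mathtt{S}$-rules to a common right-hand-side length, which under Chomsky normal form is either $1$ (all rules $\mathtt{S}\to a$) or $2$ (all rules $\mathtt{S}\to B\,C$). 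Conjoining the two implications yields the stated characterization.

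The argument carries essentially no new mathematical content, so there is no genuine obstacle beyond bookkeeping; the substantive effort was already spent on Theorem~\ref{thm:cfg-tnf} and the two corollaries. The one point I would verify explicitly is that the two corollaries are genuinely complementary --- that Corollary~\ref{cor:l1l2->tnf+cnf} supplies exactly the ``if'' and Corollary~\ref{cor:tnf+cnf->l1l2} exactly the ``only if'' --- and that no CFG producing \emph{both} a length-$1$ and a length-$2$ string can slip through, which is precisely what the term-normal-form constraint on the start-symbol rules excludes.
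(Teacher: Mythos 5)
Your proposal matches the paper's own argument exactly: the theorem is stated as an immediate consequence of Corollaries~\ref{cor:l1l2->tnf+cnf} and~\ref{cor:tnf+cnf->l1l2}, one supplying each direction of the biconditional. Your additional remark that the length property is invariant under weak equivalence is a sensible piece of bookkeeping that the paper leaves implicit.
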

which follows directly from corollaries~\ref{cor:l1l2->tnf+cnf} and~\ref{cor:tnf+cnf->l1l2}\@.

\subsection{Proof of representation theorem}
\label{sec:rethe}

The proof of the Fock space representation theorem for vector symbolic architectures follows from direct calculation using the definition of the tensor product representation \eqref{eq:tensor2}.

\begin{proof}

\begin{multline*}
    \bfcat(\ket{A(t_0, \dots, t_k)}) = (\1 \otimes \bra{m}) \ket{A(t_0, \dots, t_k)} = \\
    (\1 \otimes \bra{m})(\ket{A} \otimes \ket{m} \oplus \ket{t_0} \otimes \ket{0} \oplus \cdots \oplus \ket{t_k} \otimes \ket{k}) = \ket{A} = \ket{\cat(A(t_0, \dots, t_k))} \:,
\end{multline*}

\begin{multline*}
    \bfext_i(\ket{A(t_0, \dots, t_k)}) = (\1 \otimes \bra{i}) \ket{A(t_0, \dots, t_k)} = \\
    (\1 \otimes \bra{i})(
    \ket{A} \otimes \ket{m} \oplus \ket{t_0} \otimes \ket{0} \oplus \cdots \oplus \ket{t_k} \otimes \ket{k} ) = \ket{t_i} = \ket{\ext_i(A(t_0, \dots, t_k))} \:,
\end{multline*}

\begin{multline*}
    \bfcons_k(\ket{A}, \ket{t_0}, \dots, \ket{t_k}) =
    \ket{A} \otimes \ket{m} \oplus \ket{t_0} \otimes \ket{0} \oplus \cdots \oplus \ket{t_k} \otimes \ket{k} = \\
    \ket{A(t_0, \dots, t_k)} = \ket{\cons_k(A, t_0, \dots, t_k)}
\end{multline*}
\qed

\end{proof}

\end{document}